\DeclareMathOperator*{\argmin}{arg\,min}
\newtheorem{theorem}{Theorem}
\title{An End-to-End Graph Convolutional Kernel Support Vector Machine}
\author{Padraig Corcoran \\ School of Computer Science \& Informatics, Cardiff University}
\date{}
\begin{document}
\maketitle

\begin{abstract}
A novel kernel-based support vector machine (SVM) for graph classification is proposed. The SVM feature space mapping consists of a sequence of graph convolutional layers, which generates a vector space representation for each vertex, followed by a pooling layer which generates a reproducing kernel Hilbert space (RKHS) representation for the graph. The use of a RKHS offers the ability to implicitly operate in this space using a kernel function without the computational complexity of explicitly mapping into it. The proposed model is trained in a supervised end-to-end manner whereby the convolutional layers, the kernel function and SVM parameters are jointly optimized with respect to a regularized classification loss. This approach is distinct from existing kernel-based graph classification models which instead either use feature engineering or unsupervised learning to define the kernel function. Experimental results demonstrate that the proposed model outperforms existing deep learning baseline models on a number of datasets.
\end{abstract}

\section{Introduction}
The world contains much implicit structure which can be modelled using a graph. For example, an image can be modelled as a graph where objects (e.g. person, chair) are modelled as vertices and their pairwise relationships (e.g. sitting) are modelled as edges \cite{krishna2017visual}. This representation has led to useful solutions for many vision problems including image captioning and visual question answering \cite{chen2019scene}. Similarly, a street network can be modelled as a graph where locations are modelled as vertices and street segments are modelled as edges. This representation has led to useful solutions for many transportation problems including the placement of electrical vehicle charging stations \cite{gagarin2018multiple}.

Given the ubiquity of problems which can be modelled in terms of graphs, performing machine learning on graphs represents an area of great research interest. Advances in the application of deep learning or neural networks to sequence spaces in the context of natural language processing and fixed dimensional vector spaces in the context of computer vision has led to much interest in applying deep learning to graphs. There exist many types of machine learning tasks one may wish to perform on graphs. These include vertex classification, graph classification, graph generation \cite{you2018graphrnn} and learning implicit/hidden structures \cite{franceschi2019learning}. In this work we focus on the task of graph classification. Examples of graph classification tasks include human activity recognition where human pose is modelled using a skeleton graph \cite{yan2018spatial}, visual scene understanding where the scene is modelled using a \textit{scene graph} \cite{xu2017scenegraph} and semantic segmentation of three dimensional point clouds where the point cloud is modelled as a graph of geometrically homogeneous elements \cite{Landrieu_2018_CVPR}.

Graph convolutional is the most commonly used deep learning architecture applied to graphs. This architecture consists of a sequence of convolutional layers where each layer iteratively updates a vector space representation of each vertex. In their seminal work, Gilmer et al. \cite{gilmer2017} demonstrated that many different convolutional layers can be formulated in terms of a framework containing two steps. In the first step, message passing is performed where each vertex receives messages from adjacent vertices regarding their current representation. In the second step, each vertex performs an update of its representation which is a function of its current representation and the messages it received in the previous step. In order to perform graph classification given a sequence of convolutional layers, the set of vertex representations output from this sequence must be integrated to form a graph representation. This graph representation can subsequently be used to predict a corresponding class label. We refer to this task of integrating vertex representations as \textit{vertex pooling} and it represents the focus of this article. Note that, Gilmer et al. \cite{gilmer2017} refers to this task as \textit{readout}.

Performing vertex pooling is made challenging by the fact that different sets of vertex representations corresponding to different graphs may contain different numbers of elements. Furthermore, the elements in a given set are unordered. Therefore one cannot directly apply a feed-forward or recurrent architecture because these require an input lying in a vector space or sequence space respectively. To overcome this challenge most solutions involve mapping the sets of vertex representations to either a vector or sequence space which can then form the input to a feed-forward or recurrent architecture respectively. There exists a wide array of such solutions ranging from computing simple summary statistics such as mean vertex representation to more complex clustering based methods \cite{ying2018hierarchical}. 

In this article we propose a novel binary graph classification model which performs vertex pooling by mapping a set of vertex representations to an element in a reproducing kernel Hilbert space (RKHS). A RKHS is a function space for which there exists a corresponding kernel function equalling the dot product in this space. Being a function space where the domain of functions in this space is a Euclidean Space, the RKHS in question is of infinite dimension and in turn has high model capacity. However, the infinite nature of this space makes it challenging to work directly in this space. To overcome this challenge, we use the corresponding kernel function which allows us to implicitly compute the dot product in this space without explicitly mapping to the space in question. This is a commonly used strategy known as the \textit{kernel trick}. More specifically, the kernel corresponding to the RKHS is used within a support vector machine (SVM) to perform binary graph classification. A useful feature of the proposed pooling method is that the mapping to a RKHS is parameterized by a scale parameter which controls the degree to which different sets of vertex representations can be discriminated.

The proposed graph classification model is trained in a supervised end-to-end manner where the convolutional layers, the kernel function and SVM parameters are jointly optimized with respect to a regularized classification loss. This approach is distinct from existing kernel-based models which instead use feature engineering or unsupervised learning to define the kernel function and only optimize the parameters of the classification method in a supervised manner \cite{yanardag2015deep}. Using feature engineering can result in \textit{diagonal dominance} whereby a graph is determined to only be similar to itself, but not to any other graph \cite{yanardag2015deep}. Although unsupervised learning can overcome this problem and improve performance, the kernel may not be optimal for the task at hand given it was learned in an unsupervised as opposed to supervised manner \cite{pmlr-v80-ivanov18a}. The proposed solution of optimizing in an end-to-end manner overcomes these limitations.

The remainder of this paper is structured as follows. Section \ref{sec:related_works} reviews related work on graph kernels and vertex pooling methods. Section \ref{sec:methodology} describes the proposed graph classification model. Section \ref{sec:results} presents an evaluation of this model through comparison to 12 baseline models on 4 datasets. Finally, section \ref{sec:conclusions} draws some conclusions from this work and discusses possible future research directions.

\section{Related Work}
\label{sec:related_works}
In this work we propose a novel vertex pooling method which performs vertex pooling by mapping to a RKHS. In the following two sections we review related work on vertex pooling methods and graph kernels.

\subsection{Vertex Pooling}
As discussed in the introduction to this article, existing vertex pooling methods generally map the set of vertex representations to a fixed dimensional vector space or sequence space. The simplest methods for performing vertex pooling compute a summary statistic of the set of vertex representations. Commonly used summary statistics include mean, max and sum \cite{duvenaud2015}. Despite the simple nature of these methods, a recent study by Luzhnica et al. \cite{luzhnica2019graph} demonstrated that in some cases they can outperform more complex methods. Zhang et al. \cite{zhang2018} proposed a vertex pooling method which first performs a sorting of vertex representations based on the Weisfeiler-Lehman graph isomorphism algorithm. A subset of these vertex representations are then selected based on this ranking, where the size of this subset is a user specified parameter. Tarlow et al. \cite{li2015gated} proposed a vertex pooling method which outputs an element in sequence space. Gilmer et al. \cite{gilmer2017} proposed to perform vertex pooling by applying the set2set model from Vinyals et al. \cite{vinyals2015}. The set2set model maps the set of vertex representations to fixed dimensional vector space representation which is invariant to the order of elements in the set. Ying et al. \cite{ying2018hierarchical} proposed a vertex pooling method which uses clustering to iteratively integrate vertex representations and outputs an element in a fixed dimensional vector space. Kearnes et al. \cite{kearnes2016} proposed a vertex pooling method which creates a fuzzy histogram of the vertex representations and outputs an element in a fixed dimensional vector space.

\subsection{Graph Kernels}
As described in the introduction to this article, existing kernel-based graph classification methods use either feature engineering or unsupervised learning to define the kernel. We now review each of these approaches in turn.

The most common approach for feature engineering kernels is the $\mathcal{R}$-convolution framework where the kernel function of two graphs is defined in terms of the similarity of their respective substructures \cite{haussler1999convolution}. This framework is similar to the bag-of-words framework used in natural language processing. Substructures used in the $\mathcal{R}$-convolution framework to define kernels include graphlets \cite{shervashidze2009efficient}, shortest path properties \cite{borgwardt2005shortest} and random walk properties \cite{sugiyama2015halting}.

The Weisfeiler-Lehman framework is a framework for feature engineering kernels which is inspired by the Weisfeiler-Lehman test of graph isomorphism. In this framework the vertex representations of a given graph are iteratively updated in a similar manner to graph convolution to give a sequence of graphs. A kernel is then defined with respect to this sequence by summing the application of a given kernel, known as the base kernel, to each graph in the sequence. Shervashidze et al. \cite{shervashidze2011weisfeiler} proposed a family of kernels using this framework by considering a set of base kernels including one which measures the similarity of shortest path properties. Rieck et al. \cite{rieck2019persistent} proposed a kernel using this framework by considering a base kernel which measures the similarity of topological properties.

Kriege et al. \cite{kriege2016valid} proposed another framework for feature engineering kernels known as \textit{assignment kernels} which computes an optimal assignment between graph substructures and sums over a kernel applied to each correspondence in the assignment. The authors proposed a number of kernels using this framework including one based on the Weisfeiler-Lehman graph isomorphism algorithm. Kondor et al. \cite{kondor2016multiscale} proposed a multiscale kernel which considers vertex features plus topological information through the graph Laplacian. Zhang et al. \cite{zhang2018retgk} proposed a kernel-based on the return probabilities of random walks. The authors used an approximation of the kernel function so that the method can be applied to large datasets \cite{rahimi2008random}.

To overcome the limitations of feature engineering and improve performance, recent works in the field of graph kernels have considered unsupervised learning techniques. These methods generally learn a graph representation in an unsupervised manner and subsequently use this representation to define a kernel. Yanardag et al. \cite{yanardag2015deep} proposed a kernel which uses the $\mathcal{R}$-convolution framework to define a set of substructures and subsequently learns an embedding of these substructures in an unsupervised manner using a word2vec type model.
Ivanov et al. \cite{pmlr-v80-ivanov18a} proposed a kernel which determines two graphs to be similar if their vertices have similar neighbourhoods measured in terms of anonymous walks which are a generalization of random walks. Learning is performed in an unsupervised manner using a word2vec type model. Nikolentzos et al. \cite{nikolentzos2017matching} proposed a graph kernel which first computes sets of vertex representations corresponding to the graphs in question in an unsupervised manner. The similarity of these sets are then computed using the earth mover's distance. The authors noted that these similarities do not yield a positive semidefinite kernel matrix preventing it from being used in some kernel-based classification methods. To overcome this issue the authors use a version of the support vector machine for indefinite kernel matrices. Similar to Nikolentzos et al. \cite{nikolentzos2017matching}, Wu et al. \cite{wu2019scalable} proposed a graph kernel which first computes sets of vertex representations corresponding to the graphs in questions in an unsupervised manner. The resulting set of embeddings are in turn used to embed the graph in question by measuring the disturbance distance to sets of embeddings corresponding to random graphs. Finally, this graph representation is used to define a kernel. Nikolentzos et al. \cite{nikolentzos2018kernel} proposed a method that performs an unsupervised clustering of the input graph into components and subsequently learns a kernel function which takes as input these components.

\section{Methodology}
\label{sec:methodology}
The proposed graph classification model consists of the following three steps. In the first step, a sequence of graph convolutional layers are applied to the graph in question to generate a corresponding set of vertex representations. In the second step, this set of vertex representations is mapped to a RKHS. In the final step, graph classification is performed using a SVM. Each of these three steps are described in turn in the first three subsections of this section. In the final subsection we describe how the parameters of each step are optimized jointly in an end-to-end manner. Before that, we first introduce some notation and formally define the problem of graph classification.

A graph is a tuple $(V,E)$ where $V$ is a set of vertices and $E \subseteq (V \times V)$ is a set of edges. Let $\mathcal{G}$ denote the space of graphs. Let $l: V \rightarrow \Sigma$ denote a vertex labelling function. In this work we assume that $\Sigma$ is a finite set. Let $\mathbb{G} = \lbrace \mathcal{G}_1, \mathcal{G}_2, \dots, \mathcal{G}_n \rbrace$ denote a set of $n$ graphs and $\mathbb{Y} = \lbrace \mathcal{Y}_1, \mathcal{Y}_2, \dots, \mathcal{Y}_n \rbrace$ denote a corresponding set of graph labels. In this work we assume that graph labels take elements in the set $\lbrace 0, 1 \rbrace$. In this work we consider the problem of binary graph classification where given $\mathbb{G}$ and $\mathbb{Y}$ we wish to learn a map $\mathcal{G} \rightarrow \lbrace 0, 1\rbrace$.

\subsection{Graph Convolution Layers}
A large number of different graph convolutional layers have been proposed. Broadly speaking a graph convolutional layer will update the representation of each vertex in a given graph where this update is a function of the current representation of that vertex plus the representations of its adjacent neighbours. In this section we only briefly review existing graph convolutional layers but the interested reader can find a more indepth analysis in the following review papers \cite{zhang2018deep,wu2019comprehensive}.

Gilmer et al. \cite{gilmer2017} showed that many different convolutional layers may be reformulated in terms of a framework called \textit{Message Passing Neural Networks} defined in terms of a message function $M$ and an update function $U$. In this framework vertex representations are updated according to Equation \ref{eq:message_passing_1} where $h_v^t$ denotes the representation of vertex $v$ output from the $t$-th convolutional layer and $N(v)$ denotes the set of vertices adjacent to $v$. Each vertex representation $h_v^t$ is an element of $\mathbb{R}^m$ where the dimension $m$ may vary from layer to layer. For the input layer, that is $t=1$, vertex representations equal a one-hot encoding of the vertex labelling function $l$ and therefore the corresponding dimension is $\vert \Sigma \vert$. For all subsequent layers the corresponding dimension is a model hyper-parameter.

\begin{equation}
\label{eq:message_passing_1}
\begin{split}
m^{t+1}_v & = \sum_{w \in N(v)} M(h_v^t, h_w^t) \\
h^{t+1}_v & = U(h^{t}_v, m^{t+1}_v)
\end{split}
\end{equation}

In the proposed graph classification model we use the functions $M$ and $U$ originally proposed by Hamilton et al. \cite{hamilton2017} and defined in Equation \ref{eq:message_passing_2}. Here CONCAT is the horizontal vector concatenation operation, $\mathbf{W}^t$ and $\mathbf{b}^t$ are the weights and biases respectively for the $t$-th convolutional layer, and ReLU is the real valued rectified linear unit non-linearity.

\begin{equation}
\label{eq:message_passing_2}
\begin{split}
M(h_v^t, h_w^t) & =  h_w^t \\
U(h^{t}_v, m^{t+1}_v) & = \text{ReLU} \left( \mathbf{W}^t \cdot \text{CONCAT} \left( h_v^{t-1}, m^{t+1}_v \right) + \mathbf{b}^t \right)
\end{split}
\end{equation}

A sequence of two convolutional layers were used in the proposed model. A number of studies have found that the use of two layers empirically gives the best performance \cite{kipf2017semi}. This sequence of layers will map a graph $\mathcal{G}_i=(V,E)$ to a set of $\vert V \vert$ points in $\mathbb{R}^m$ where $m$ is the dimension of the final convolutional layer. Since the number of vertices in a graph may vary the number of points in $\mathbb{R}^m$ may in turn vary. Let us denote by Set the space of sets of points in $\mathbb{R}^m$. Given this, the sequence of convolutions layers defines a map $\mathcal{G} \rightarrow \text{Set}$.

\subsection{Mapping to RKHS}
The output from the sequence of convolutional layers defined in the previous subsection is an element in the space Set. In this section we propose a method for mapping elements in this space to a reproducing kernel Hilbert space (RKHS). We in turn define a kernel between elements in this space.

A Hilbert space is a vector space with an inner product such that the induced norm turns the space into a complete metric space. A positive-semidefinite kernel on a set $\mathcal{X}$ is a function $k: \mathcal{X} \times \mathcal{X} \rightarrow \mathbb{R}$ such that there exists a feature space $\mathcal{H}$ and a map $\phi: \mathcal{X} \rightarrow \mathcal{H}$ such that $k \left(x,y \right) = \langle \phi(x) , \phi(y) \rangle$ where $x,y \in \mathcal{X}$ and $\langle \cdot , \cdot \rangle$ denotes the dot product in $\mathcal{H}$. Equivalently, a function $k: \mathcal{X} \times \mathcal{X} \rightarrow \mathbb{R}$ is a kernel if and only if for every subset $\left\lbrace x_1, \dots, x_q \right\rbrace \subseteq \mathcal{X}$, the $q \times q$ matrix $K$ with entries $K_{ij}=k(x_i, x_j)$ is positive semi-definite \cite{scholkopf2002learning}. Given a kernel $k$, one can define a map $\mathcal{X} \to \mathbb{R}^\mathcal{X}$ as Equation \ref{eq:kernel_function_space} where codomain of this map is the space of real valued functions on $\mathcal{X}$. Such a space is called a function space. Given this, it can be proven that $k(x,y) = \langle k(\cdot, x) , k(\cdot, y) \rangle$. By virtue of this property, $\mathbb{R}^\mathcal{X}$ is called a \textit{reproducing kernel Hilbert space} (RKHS) corresponding to the kernel $k$ \cite{scholkopf2002learning}.

\begin{equation}
\label{eq:kernel_function_space}	
x \mapsto k(\cdot, x)
\end{equation}

Let $k^R_\sigma: \mathbb{R}^m \times \mathbb{R}^m \rightarrow \mathbb{R}$ be the Gaussian kernel function defined in Equation \ref{eq:gaussian} which is parameterized by $\sigma \in \mathbb{R}_{\geq0}$. 

\begin{equation}
\label{eq:gaussian}
k^R_\sigma(u,v) = \exp(-\Vert u-v \Vert^2_2 /2\sigma^2)
\end{equation}

Given $k^R_\sigma$, we define a map $F: \text{Set} \times \mathbb{R} \rightarrow \mathbb{R}^{\mathbb{R}^m}$ in Equation \ref{eq:function_space} where $\mathbb{R}^{\mathbb{R}^m}$ is the space of real valued functions on $\mathbb{R}^m$. To illustrate this map consider the element of Set displayed in Figure \ref{fig:function_space_1} where the dimension $m$ equals 2. Recall that elements in the space Set correspond to sets of points in $\mathbb{R}^m$. Figures \ref{fig:function_space_2} and \ref{fig:function_space_3} display the elements of $\mathbb{R}^{\mathbb{R}^m}$ resulting from applying the map $F$ to this element of Set with $\sigma$ parameter values of $0.001$ and $0.0005$ respectively.

\begin{equation}
\label{eq:function_space}
F(x, \sigma) = \sum_{v \in x} k^R_\sigma(v,\cdot)
\end{equation}

The parameter $\sigma$ of the map $F$ is a scale parameter and may be interpreted as follows. As the value of $\sigma$ approaches $0$, $F(x, \sigma)$ becomes a sum of a set indicator functions applied to $x$. In this case distinct elements of the space Set map to distinct elements of $\mathbb{R}^{\mathbb{R}^m}$ where the distance between these functions measured by the $L^p$ norm is greater than zero. On the other hand, as $\sigma$ approaches $\infty$, differences between the functions are gradually smoothed out and in turn the distance between the functions gradually reduces. Therefore, one can view the parameter $\sigma$ as controlling the discrimination power of the method. 

\begin{figure*}
\begin{center}
\subfigure[]{\includegraphics[height=3.5cm]{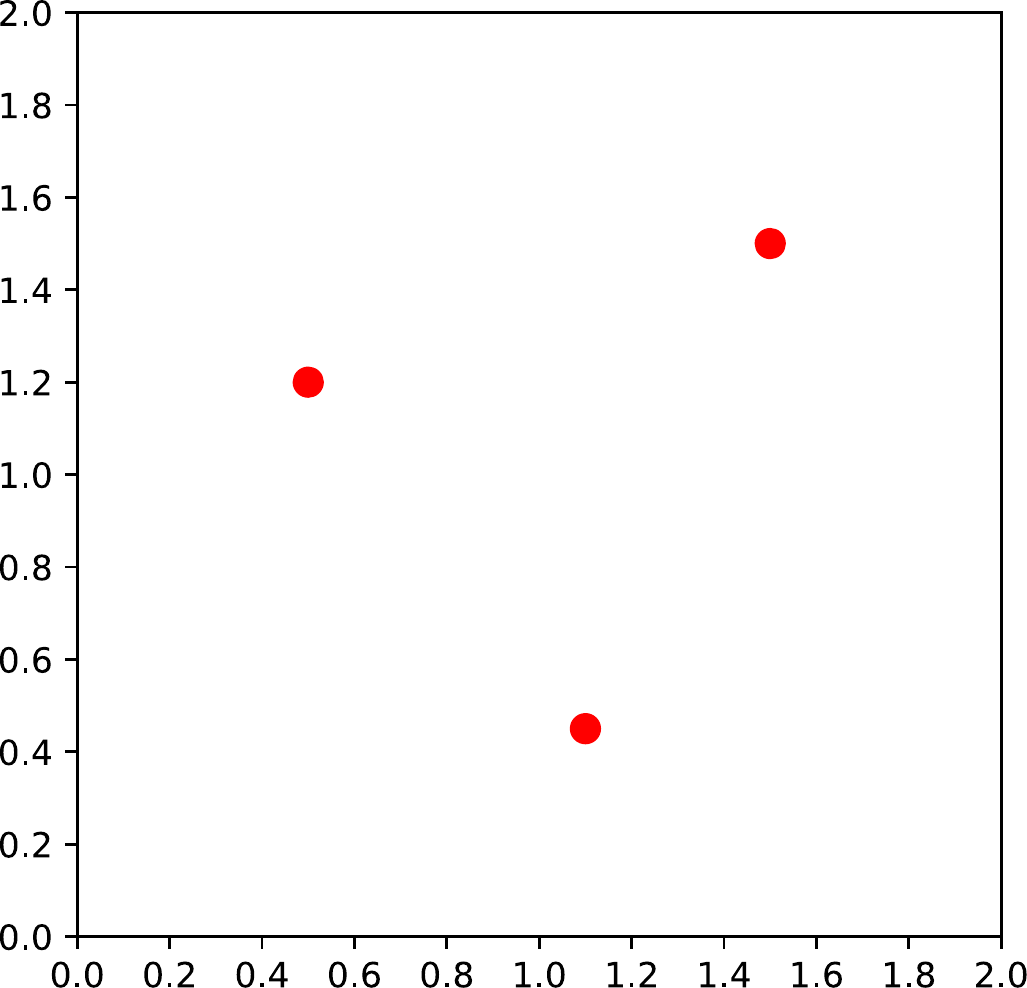}
\label{fig:function_space_1}}
\hspace{.2cm}
\subfigure[]{\includegraphics[height=3.5cm]{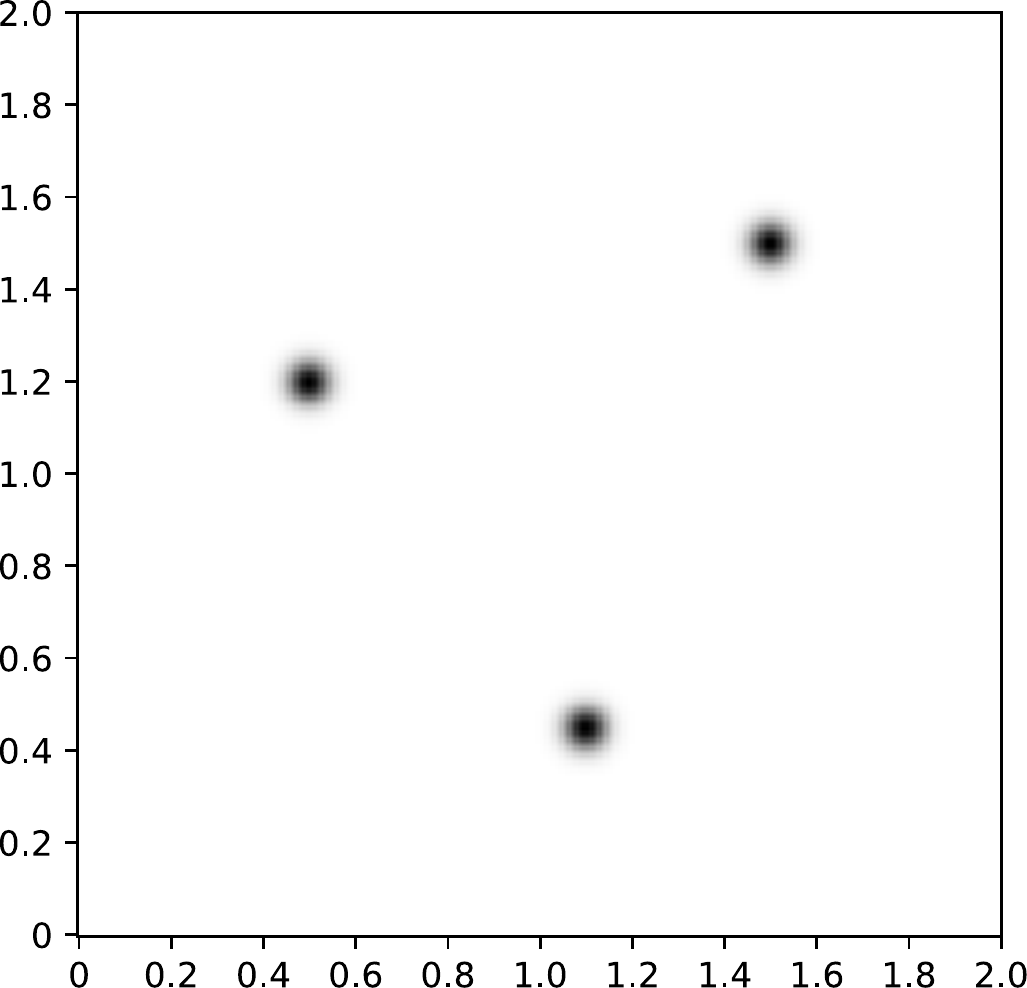}
\label{fig:function_space_2}}
\hspace{.2cm}
\subfigure[]{\includegraphics[height=3.5cm]{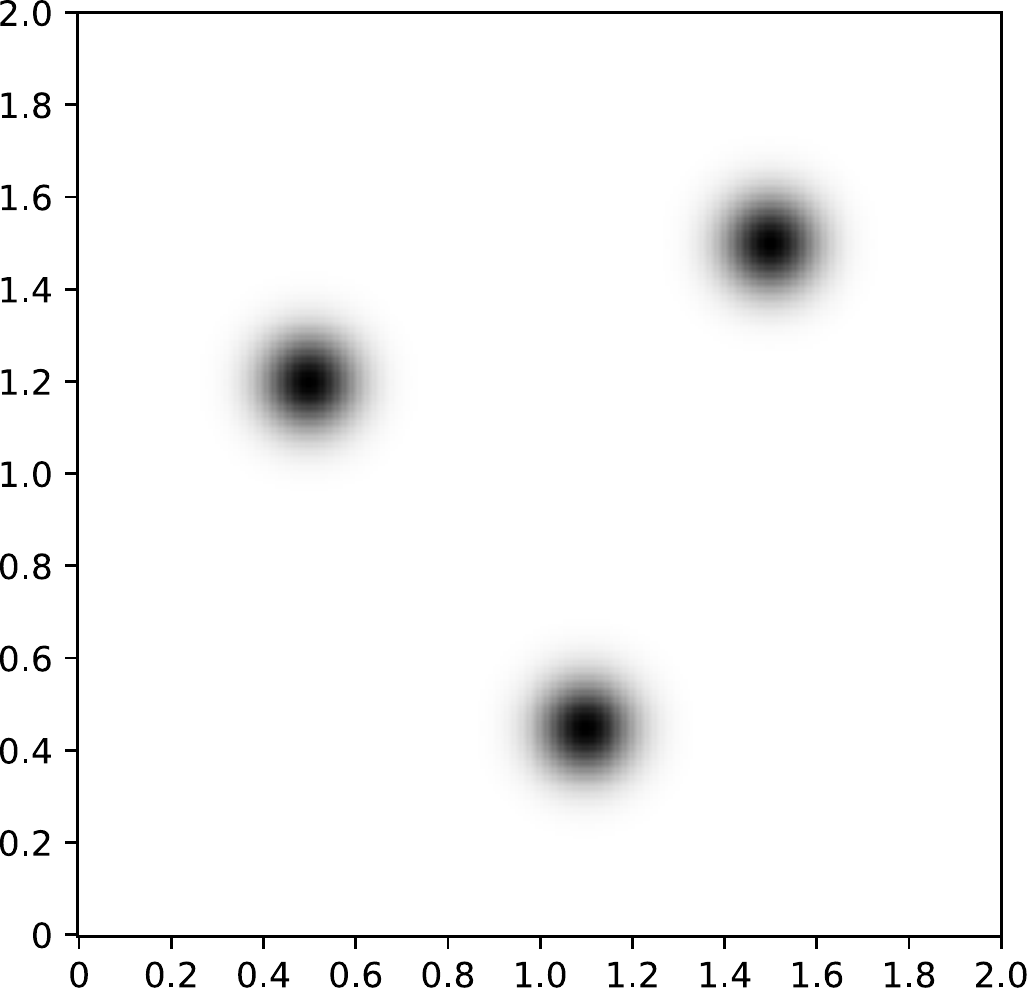}
\label{fig:function_space_3}}
\caption{An element of the space Set is displayed in (a) where the dimension $m$ equals 2 and each point in $\mathbb{R}^m$ is represented by a red dot. The elements of $\mathbb{R}^{\mathbb{R}^m}$, which are themselves functions $\mathbb{R}^{m} \rightarrow \mathbb{R}$, resulting from applying the map $F$ to this element with $\sigma$ parameter values of $0.001$ and $0.0005$ are displayed in (b) and (c) respectively.}
\label{fig:intro_shapes_single_component}
\end{center}
\end{figure*}

Given the map $F$ defined in Equation \ref{eq:function_space}, we define the kernel $k^L_\sigma: \mathbb{R}^{\mathbb{R}^m} \times \mathbb{R}^{\mathbb{R}^m} \rightarrow \mathbb{R}$ in Equation \ref{eq:function_kernel}. Note that, the final equality in this equation follows from the reproducing property of the RKHS related to $k^R_\sigma$ and the bilinearity of the inner product \cite{paulsen2016introduction}. By examination of Equation \ref{eq:function_kernel}, we see that the kernel $k^L_\sigma$ equals the dot product between elements in the codomain of the map $F$ which is an infinite dimensional function space. That is, the kernel allows us to operate in this codomain without the computational complexity of explicitly mapping into it. In Theorem \ref{them:1} we prove that $k^L_\sigma$ is a valid positive-semidefinite kernel.

\begin{equation}
\label{eq:function_kernel}
\begin{split}
& k^L_\sigma(F(x_i, \sigma), F(x_j, \sigma)) = \langle F(x_i, \sigma), F(x_j, \sigma) \rangle \\
& = \langle \sum_{v \in x_i} k^R_\sigma(v,\cdot), \sum_{u \in x_j} k^R_\sigma(u,\cdot) \rangle = \sum_{v \in x_i} \sum_{u \in x_j} k^R_\sigma(v, u)
\end{split}
\end{equation}

\begin{theorem}
\label{them:1}
The kernel $k^L_\sigma$ is a positive-semidefinite kernel.
\end{theorem}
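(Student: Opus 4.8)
The plan is to exploit Equation \ref{eq:function_kernel} directly, which already exhibits $k^L_\sigma$ as a dot product, together with the well-known fact that the Gaussian (RBF) kernel $k^R_\sigma$ is itself positive-semidefinite. First I would recall that, since $k^R_\sigma$ is a positive-semidefinite kernel on $\mathbb{R}^m$, it possesses a reproducing kernel Hilbert space $\mathcal{H} \subseteq \mathbb{R}^{\mathbb{R}^m}$ with inner product $\langle \cdot, \cdot \rangle$ satisfying $\langle k^R_\sigma(\cdot, u), k^R_\sigma(\cdot, v) \rangle = k^R_\sigma(u,v)$. Because every $x \in \text{Set}$ is a finite set of points in $\mathbb{R}^m$, the function $F(x,\sigma) = \sum_{v \in x} k^R_\sigma(v,\cdot)$ is a finite linear combination of elements of $\mathcal{H}$ and hence lies in $\mathcal{H}$; thus $F(\cdot,\sigma): \text{Set} \to \mathcal{H}$ is a well-defined map into a Hilbert space.

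Next I would invoke the definition of a positive-semidefinite kernel stated in Section \ref{sec:methodology}, instantiated with $\mathcal{X} = \text{Set}$, feature space $\mathcal{H}$, and feature map $\phi = F(\cdot,\sigma)$. Equation \ref{eq:function_kernel} states precisely that $k^L_\sigma(F(x_i,\sigma), F(x_j,\sigma)) = \langle \phi(x_i), \phi(x_j) \rangle$, which is exactly the defining property of a positive-semidefinite kernel; hence $k^L_\sigma$ is positive-semidefinite. To make the argument self-contained I would also verify the equivalent Gram-matrix condition: for any finite collection $\lbrace x_1, \dots, x_q \rbrace \subseteq \text{Set}$ and any scalars $c_1, \dots, c_q \in \mathbb{R}$, expanding $\sum_{i,j} c_i c_j\, k^L_\sigma(F(x_i,\sigma), F(x_j,\sigma)) = \sum_{i,j} c_i c_j \sum_{v \in x_i} \sum_{u \in x_j} k^R_\sigma(v,u)$ and re-indexing over the disjoint union $\bigsqcup_i x_i$, with each point appearing in the copy of $x_i$ carrying weight $c_i$, rewrites this as a single quadratic form $\sum_{p,p'} w_p w_{p'}\, k^R_\sigma(z_p, z_{p'}) \geq 0$, which is nonnegative because $k^R_\sigma$ is positive-semidefinite.

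The only real subtlety, and the point I would be most careful about, is the bookkeeping when the same point of $\mathbb{R}^m$ occurs in several of the sets $x_i$: the re-indexing must be over the disjoint union $\bigsqcup_i x_i$ rather than over the plain union, so that each occurrence contributes its own weight; otherwise the reduction to a single Gram matrix of $k^R_\sigma$ would be invalid. Everything else — well-definedness of $F$, finiteness of the double sums, and the bilinearity of $\langle \cdot,\cdot\rangle$ used to pass from $\langle F(x_i,\sigma), F(x_j,\sigma)\rangle$ to $\sum_{v \in x_i}\sum_{u \in x_j} k^R_\sigma(v,u)$ — is routine and has already been used in deriving Equation \ref{eq:function_kernel}. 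One could optionally also remark that this construction is an instance of the general fact that sums of feature maps (equivalently, the kernel mean embedding of a set/measure) yield valid kernels, but the direct argument above is the most transparent.
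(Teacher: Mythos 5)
Your proposal is correct and its core is the same as the paper's: Equation \ref{eq:function_kernel} exhibits $k^L_\sigma$ as an inner product $\langle F(x_i,\sigma), F(x_j,\sigma)\rangle$, and a function that equals a dot product under some feature map is by definition positive-semidefinite. The paper's own proof is a single sentence asserting exactly this. Where you go further is in supplying the justification the paper leaves implicit --- namely that $F(x,\sigma)$, being a finite linear combination of functions $k^R_\sigma(v,\cdot)$, genuinely lies in the RKHS of the Gaussian kernel, so that $\langle\cdot,\cdot\rangle$ is a bona fide Hilbert-space inner product rather than a formal symbol --- and in adding an independent, fully elementary verification of the Gram-matrix condition by re-indexing the quadratic form over the disjoint union $\bigsqcup_i x_i$ and reducing to positive semidefiniteness of $k^R_\sigma$. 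Your care about using the disjoint union rather than the plain union when a point appears in several sets is exactly the right bookkeeping point. Both additions strengthen rather than change the argument; nothing in your proposal is in tension with the paper's proof.
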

 
\begin{proof}
The kernel $k^L_\sigma$ is a positive-semidefinite kernel because it is defined in Equation \ref{eq:function_kernel} to equal the dot product in the space $\mathbb{R}^{\mathbb{R}^m}$.
\end{proof}

The kernel $k^L_\sigma$ has a specific scale which is specified by $\sigma$. In order to adopt a multi-scale approach we consider a set of $s$ scales $\Sigma = \lbrace \sigma_1, \dots, \sigma_s \rbrace$ to define a corresponding set of kernels $\lbrace k^L_{\sigma_1}, \dots, k^L_{\sigma_s} \rbrace$. We combine these kernels using a linear combination defined in Equation \ref{eq:function_kernel_sum} where $\lbrace \beta_1, \dots, \beta_s \rbrace \in \mathbb{R}^s_{\geq0}$. In Theorem \ref{them:2} we prove that $k^L_\Sigma$ is a valid positive-semidefinite kernel.

\begin{equation}
\label{eq:function_kernel_sum}
k^L_\Sigma(F(x_i), F(x_j)) = \sum_{l=1}^{s} \beta_l k^L_{\sigma_l}(F(x_i), F(x_j))
\end{equation}

\begin{theorem}
\label{them:2}
The kernel $k^L_\Sigma$ is a positive-semidefinite kernel.
\end{theorem}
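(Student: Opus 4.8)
The plan is to deduce the result from Theorem \ref{them:1} together with the elementary fact that the positive-semidefinite kernels on a fixed set form a cone that is closed under nonnegative linear combinations. Two routes are available, and I would present the one that mirrors the proof of Theorem \ref{them:1}, namely exhibiting an explicit feature map, while also noting the Gram-matrix argument.

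For the feature-map route, recall that by Theorem \ref{them:1} each $k^L_{\sigma_l}$ is a kernel, so there is a Hilbert space $\mathcal{H}_l$ and a map $\phi_l$ with $k^L_{\sigma_l}(F(x_i),F(x_j)) = \langle \phi_l(F(x_i)), \phi_l(F(x_j)) \rangle_{\mathcal{H}_l}$; concretely one may take $\mathcal{H}_l = \mathbb{R}^{\mathbb{R}^m}$ and $\phi_l = F(\cdot,\sigma_l)$. I would then form the direct-sum Hilbert space $\mathcal{H} = \mathcal{H}_1 \oplus \dots \oplus \mathcal{H}_s$ with its natural inner product and define $\phi(x) = \left( \sqrt{\beta_1}\,\phi_1(F(x)), \dots, \sqrt{\beta_s}\,\phi_s(F(x)) \right)$, which is well defined precisely because each $\beta_l \geq 0$. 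Expanding the inner product in $\mathcal{H}$ gives $\langle \phi(x_i), \phi(x_j) \rangle_{\mathcal{H}} = \sum_{l=1}^{s} \beta_l \langle \phi_l(F(x_i)), \phi_l(F(x_j)) \rangle_{\mathcal{H}_l} = \sum_{l=1}^{s} \beta_l k^L_{\sigma_l}(F(x_i),F(x_j)) = k^L_\Sigma(F(x_i),F(x_j))$, so $k^L_\Sigma$ is realized as a dot product in a Hilbert space and is therefore positive-semidefinite.

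Alternatively, using the Gram-matrix characterization recalled earlier in the excerpt, I would fix an arbitrary finite collection of inputs and write the Gram matrix of $k^L_\Sigma$ as $\sum_{l=1}^{s} \beta_l K^{(l)}$, where $K^{(l)}$ is the Gram matrix of $k^L_{\sigma_l}$. Each $K^{(l)}$ is positive-semidefinite by Theorem \ref{them:1}, scaling by $\beta_l \geq 0$ preserves positive-semidefiniteness, and a sum of positive-semidefinite matrices is positive-semidefinite; hence so is the Gram matrix of $k^L_\Sigma$, and $k^L_\Sigma$ is a kernel.

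I do not anticipate a genuine obstacle: the statement is an instance of a standard closure property of kernels, and the only point requiring care is the hypothesis $\beta_l \geq 0$, which is exactly what makes $\sqrt{\beta_l}$ real in the first argument and what makes $\beta_l K^{(l)}$ positive-semidefinite rather than merely symmetric in the second. The substantive work—showing each $k^L_{\sigma_l}$ is a kernel—has already been carried out in Theorem \ref{them:1}.
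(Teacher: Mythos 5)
Your proposal is correct and takes essentially the same route as the paper: the paper's proof simply observes that $k^L_\Sigma$ is a nonnegative linear combination of positive-semidefinite kernels and cites Proposition 13.1 of \cite{scholkopf2002learning} for the closure property. You additionally supply two standard proofs of that closure property (the $\sqrt{\beta_l}$-scaled direct-sum feature map and the Gram-matrix argument), which the paper delegates to the citation, but the underlying idea is identical.
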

 
\begin{proof}
The kernel $k^L_\Sigma$ is a positive-semidefinite kernel because it is the sum of positive-semidefinite kernels and the coefficients $\lbrace \beta_1, \dots, \beta_s \rbrace$ are all positive (see proposition 13.1 in \cite{scholkopf2002learning}).
\end{proof}

\subsection{SVM}
Recall that we consider the problem of graph classification whereby given $\mathbb{G} = \lbrace \mathcal{G}_1, \mathcal{G}_2, \dots, \mathcal{G}_n \rbrace$ and $\mathbb{Y} = \lbrace \mathcal{Y}_1, \mathcal{Y}_2, \dots, \mathcal{Y}_n \rbrace$ we wish to learn a map $\mathcal{G} \rightarrow \lbrace 0, 1\rbrace$.


Let $f: \text{Set} \rightarrow \mathbb{R}$ be a map from which we obtain a decision function by sgn$(f)$. That is, if $f$ returns a positive value we classify the graph in question as $1$ and otherwise we classify it as $0$. We determine a suitable map $f$ lying in the RKHS $\mathcal{H}$ corresponding to the kernel $k^L_\Sigma$ by Equation \ref{eq:opt_f_1}. Note that, the first term in this sum corresponds to the \textit{soft margin loss} \cite{scholkopf2002learning} and the second term is a regularization term.

\begin{equation}
\label{eq:opt_f_1}
\hat{f} = \argmin_{f \in \mathcal{H}} \sum_{i=1}^{n} \max(0, 1-y_if(x_i)) + \lambda \Vert f \Vert_\mathcal{H}^2
\end{equation}

By the \textit{representer theorem} any solution to Equation \ref{eq:opt_f_1} can be written in the form of Equation \ref{eq:f_representer} where $\lbrace \alpha_1, \dots, \alpha_n \rbrace \in \mathbb{R}^n$ \cite{paulsen2016introduction}.

\begin{equation}
\label{eq:f_representer}
f(\cdot) = \sum_{j=1}^{n} \alpha_j k^L_\Sigma(\cdot, x_j)
\end{equation}

Substituting this into Equation \ref{eq:opt_f_1} we obtain Equation \ref{eq:opt_f_2} where optimization of the function $f$ is performed with respect to $\lbrace \alpha_1, \dots, \alpha_n \rbrace \in \mathbb{R}^n$. Here $K^L_{i,j} = k^L_\Sigma(x_i, x_j)$, $\odot$ is the elementwise multiplication operator (Hadamard product), $\vec{0}$ is a vector of zeros of size $n$ and $\vec{1}$ is a vector of ones of size $n$.

\begin{equation} \label{eq:opt_f_2}
\begin{split}
\hat{f} & = \argmin_{\alpha \in \mathbb{R}^n} \sum_{i=1}^{n} \max(0, 1-y_i \sum_{j=1}^{n} \alpha_j k^L_\Sigma(x_i, x_j)) \\
 & \qquad \qquad + \lambda \Vert \sum_{j=1}^{n} \alpha_j k^L_\Sigma(\cdot, x_j) \Vert_\mathcal{H}^2 \\
 & = \argmin_{\alpha \in \mathbb{R}^n} \sum_{i=1}^{n} \max(0, 1-y_i \sum_{j=1}^{n} \alpha_j k^L_\Sigma(x_i, x_j)) \\
 & \qquad \qquad + \lambda \sum_{i,j=1}^{n} \alpha_i \alpha_j k^L_\Sigma(x_i, x_j) \\
 & = \argmin_{\alpha \in \mathbb{R}^n} \Vert \max(\vec{0}, \vec{1} - y \odot K^L \alpha) \Vert_1 \\
 & \qquad \qquad + \lambda \alpha^T K^L \alpha
\end{split}
\end{equation}

\subsection{End-to-End Optimization}
As described in the previous subsections, the proposed classification model contains three steps with each having corresponding parameters which require optimization with respect to the objective function defined in Equation \ref{eq:opt_f_2}. The parameters in question are the sets of convolutional layer parameters $\mathbf{W}^t$ and $\mathbf{b}^t$ defined in Equation \ref{eq:message_passing_2}, the sets of kernel parameters $\sigma_l$ and $\beta_l$ defined in Equation \ref{eq:function_kernel_sum}, and the set of SVM parameters $\alpha_j$ defined in Equation \ref{eq:f_representer}. All of these parameters are unconstrained real values apart from the sets of kernel parameters $\sigma_l$ and $\beta_l$ which are constrained to be positive real values. As such, the optimization problem in question is a constrained optimization problem. In this work we wish to optimize all the above model parameters jointly in an end-to-end manner. We refer to this as the end-to-end optimization problem. Note that, if only the SVM parameters were optimized and all other parameters were fixed, the optimization problem could be formulated as a quadratic program by taking the dual and solved in closed-form \cite{scholkopf2002learning}. This is the most commonly used method for optimizing the parameters of an SVM. 

In order to solve the end-to-end optimization problem we use a gradient based optimization method. Such methods are the most commonly used methods for optimizing neural network parameters \cite{goodfellow2016deep}. There are two main approaches that can be used to apply a gradient based optimization method to a constrained optimization problem. The first approach is to project the result of each gradient step back into the feasible region. The second approach is to transform the constrained optimization problem into an unconstrained optimization problem and solve this problem. Such a transformation can be achieved using the Karush-Kuhn-Tucker (KKT) method \cite{nocedal2006numerical}. In this work we use the former approach. In practice this reduces to passing the parameters $\sigma_l$ and $\beta_l$ through the function $\max(\cdot, 0)$ after each gradient step. The above optimization can be used in conjunction with any gradient based optimization method such as stochastic gradient descent. In this work the Adam method was used \cite{kingma2014adam}. 

\section{Evaluation}
\label{sec:results}
In this section we present an evaluation of the proposed end-to-end graph classification model with respect to current state-of-the-art models. This section is structured as follows. Section \ref{sec:results_imple} provides implementation details for the proposed model. Section \ref{sec:results_bench} describes the baseline models used to compare the proposed model against. Finally, section \ref{sec:results_datasets} describes the datasets used in this evaluation and compares the performance of all models on these datasets.

\subsection{Implementation Details}
\label{sec:results_imple}
The parameters of the proposed model were initialized as follows. The convolutional layer weights $\mathbf{W}^t$ and biases $\mathbf{b}^t$ in Equation \ref{eq:message_passing_2} were initialized using Kaiming initialization \cite{he2015delving} and to a value of $0$ respectively. The kernel parameters $\lbrace \sigma_1, \dots, \sigma_s \rbrace$ and $\lbrace \beta_1, \dots, \beta_s \rbrace$ and $s$ in Equation \ref{eq:function_kernel_sum} were all initialized to a value of $1$.

The model hyper-parameters were set as follows. The dimension of the convolutional hidden layers was set equal to $25$. The Adam optimizer learning rate was set to its default value of $0.001$ and training was performed for 300 epochs. The hyper-parameters $\lambda$ in Equation \ref{eq:opt_f_2} and $s$ in Equation \ref{eq:function_kernel_sum} were selected from the sets $\lbrace 0.0, 0.5, 1.0, 1.5, 2.0, 2.5, 3.0 \rbrace$ and $\lbrace 1,2 \rbrace$ respectively by considering classification accuracy on a validation set. Larger values for the hyper-parameter $s$ were not considered to ensure scalability of the model to medium sized datasets.


The time and space complexity of classifying a given graph is O($n$) where $n$ is the number of graphs in the training dataset. This is a consequence of the summation in Equation \ref{eq:f_representer} over all training examples.
The time and space complexity of performing an update of the method parameters using backprop is O($n^2$) because this step computes the complete kernel matrix $K$ in Equation \ref{eq:opt_f_2}. Note that, the above time complexity analysis assumes that each element in the kernel matrix $K$ can be computed in constant time. In reality, if we assume that each graph contains $m$ vertices the time complexity of computing each element in $K$ is $m^2$ (see Equation \ref{eq:function_kernel}). In this case the time complexity of classifying a given graph is O($n m^2$) while the time complexity of performing an update of the method parameters using backprop is O($n^2 m^2$).

\subsection{Baseline Methods}
\label{sec:results_bench}
As described in the related work section of this paper, existing models for graph classification belong to two main categories of feature engineered kernel and end-to-end deep learning models. For the purposes of this evaluation, we compared the model proposed in this work to baseline models in each of these categories.

A set of 17 baseline models were considered where this set contains 5 feature engineered kernel models and 12 end-to-end deep learning models. We considered so many baseline models to ensure we were comparing to state of the art; many existing models claim to outperform each other so it is difficult to determine which models are in fact state of the art. The end-to-end deep learning baseline models considered in the evaluation are end-to-end models but not are kernel-based models. The proposed model is the first end-to-end kernel-based model for graphs.

In order to cover the breadth of different feature engineered kernel models, we considered three kernel functions in the $\mathcal{R}$-convolution framework, one kernel function in the Weisfeiler-Lehman framework and one kernel function which uses unsupervised learning. The kernel functions in question are entitled Graphlet by Shervashidze et al. \cite{shervashidze2009efficient}, Shortest Path by Borgwardt et al. \cite{borgwardt2005shortest}, Vertex Histogram by Sugiyama et al. \cite{sugiyama2015halting}, Weisfeiler Lehman by Shervashidze et al. \cite{shervashidze2011weisfeiler} and Pyramid Match by Nikolentzos et al. \cite{nikolentzos2017matching}. These kernel functions are described in the Appendix section of this article. For each kernel function, classification was performed using a Support Vector Machine (SVM) with a kernel matrix precomputed using the kernel function in question. Implementations for the kernel functions were obtained from the \textit{GraKeL} Python library \cite{JMLR:v21:18-370}.

The end-to-end deep learning baseline models considered are entitled GCN, GCNWithJK, GIN, GIN0, GINWithJK, GIN0WithJK, GraphSAGE, GraphSAGEWithJK, DiffPool, GlobalAttentionNet, Set2SetNet and SortPool. The architectures of these models are described in the Appendix section of this article. Implementations for these models were obtained from the \textit{PyTorch Geometric} Python library \cite{fey2019}; these can be downloaded directly from the benchmark section of the PyTorch Geometric website \footnote{\url{https://github.com/rusty1s/pytorch_geometric/tree/master/benchmark}}.  For each end-to-end deep learning baseline model the corresponding model parameters were optimized using the Adam optimizer with the default learning rate of $0.001$ and run for 300 epochs. In all cases a negative log likelihood loss function was used. Model hyper-parameters corresponding to the number and dimension of hidden layers were selected from the sets $\lbrace 1, 2, 3, 4, 5 \rbrace$ and $\lbrace 16, 32, 64, 128 \rbrace$ respectively by considering the loss on a validation set.

\subsection{Datasets and Results}
\label{sec:results_datasets}
To evaluate the proposed graph classification model we considered five commonly used graph classification datasets obtained from the TU Dortmund University graph dataset repository \cite{KKMMN2016} \footnote{\url{https://chrsmrrs.github.io/datasets/}}. Summary statistics for each of these datasets are displayed in Table \ref{table:datasets}. The MUTAG dataset contains graphs corresponding to chemical compounds and the binary classification problem concerns predicting a particular characteristic of the chemical \cite{debnath1991}. The PTC\_MR dataset contains graphs corresponding to chemical compounds and the binary classification problem concerns predicting a carcinogenicity property. The BZR\_MD dataset contains graphs corresponding to chemical compounds and the binary classification problem concerns predicting a particular characteristic of the chemical \cite{sutherland2003spline}. The PTC\_FM dataset contains graphs corresponding to chemical compounds and the binary classification problem and concerns predicting a carcinogenicity property. The COX2 dataset contains graphs corresponding to molecules and the binary classification problem concerns predicting if a given molecule is active or inactive \cite{sutherland2003spline}.

\begin{table*}[h!]
\centering
\begin{tabular}{c | c | c | c | c | c} 
 & \textbf{MUTAG} & \textbf{PTC\_MR} & \textbf{BZR\_MD} & \textbf{PTC\_FM} & \textbf{COX2} \\ [0.5ex] 
 \hline
 Number graphs & 188 & 344 & 306 & 349 & 467 \\
 Mean number vertices & 17.93 & 14.29 & 35.75 & 14.11 & 41.22 \\
 Mean number edges & 19.79 & 14.69 & 38.36 & 14.48 & 43.45 \\
 \hline
\end{tabular}
\caption{Summary statistics for each dataset used in the evaluation.}
\label{table:datasets}
\end{table*}

Stratified $k$-folds cross-validation with a $k$ value of $10$ was used to split the data into training, validation and testing sets. During each of the $k$ training steps, one of the $k-1$ folds in the training set was randomly selected to be a validation set and classification accuracy on this set was used to select model hyper-parameters. It has been shown that different training, validation and testing set splits of the data can lead to quite different rankings of graph classification models \cite{shchur2018pitfalls}. However averaging the performance of $k$ different splits, as done in this work, helps to reduce this instability. In our analysis the same training, testing and validation splits were used for all graph classification models considered. This is an important point because the performance of a given model may vary as a function of the split used. For each dataset we computed the mean accuracy on the test sets for each method. The results of this analysis are displayed in Table \ref{table:mean_accuracy}. For two of the five datasets, the proposed graph classification model achieved the best mean performance and outperformed most other models by a significant margin. For the remaining three datasets, the proposed method achieved a better mean performance than many but not all baseline methods. These positive results demonstrate the utility of the proposed model. In most cases the proposed model achieved best performance on the validation set with the hyper-parameter $s$ having a value of $2$. Recall, from Equation \ref{eq:function_kernel_sum}, that this hyper-parameter equals the number of individual kernels integrated by the model. This demonstrates the utility of integrating multiple kernels.

\begin{table*}[h!]
\centering
\begin{tabular}{c | c | c | c | c | c} 
 \textbf{Model} & \textbf{MUTAG} & \textbf{PTC\_MR} & \textbf{BZR\_MD} & \textbf{PTC\_FM} & \textbf{COX2} \\ [0.5ex] 
 \hline
 Graphlet &  0.86 $\pm$ 0.05  & 0.54 $\pm$ 0.08 & 0.61 $\pm$ 0.12 & 0.57 $\pm$ 0.07 & 0.75 $\pm$ 0.06 \\
 Shortest Path & 0.82 $\pm$ 0.06 & 0.55 $\pm$ 0.08 & 0.68 $\pm$ 0.07 & 0.55 $\pm$ 0.07 & 0.77 $\pm$ 0.06 \\
 Vertex Histogram & 0.85 $\pm$ 0.05 & 0.58 $\pm$ 0.07 & \textbf{0.70 $\pm$ 0.08} & 0.58 $\pm$ 0.06 & 0.75 $\pm$ 0.06 \\
 Weisfeiler Lehman & 0.71 $\pm$ 0.06 & 0.59 $\pm$ 0.07 & 0.60 $\pm$ 0.07 & 0.63 $\pm$ 0.08 & 0.65 $\pm$ 0.08 \\
 Pyramid Match & 0.86 $\pm$ 0.06 & 0.54 $\pm$ 0.06 & 0.62 $\pm$ 0.07 & 0.57 $\pm$ 0.09 & 0.73 $\pm$ 0.06 \\ \\
 
 GCN & 0.73 $\pm$ 0.06 & 0.57 $\pm$ 0.03 & 0.68 $\pm$ 0.06 & 0.61 $\pm$ 0.08 & 0.79 $\pm$ 0.03 \\
 GCNWithJK & 0.73 $\pm$ 0.07 & 0.57 $\pm$ 0.05 & 0.68 $\pm$ 0.07 & 0.62 $\pm$ 0.08 & 0.78 $\pm$ 0.02 \\
 GIN & 0.82 $\pm$ 0.07 & 0.54 $\pm$ 0.05 & 0.62 $\pm$ 0.09 & 0.57 $\pm$ 0.07 & 0.80 $\pm$ 0.04 \\
 GIN0 & 0.85 $\pm$ 0.04 & 0.57 $\pm$ 0.08 & 0.63 $\pm$ 0.13 & 0.59 $\pm$ 0.05 & 0.77 $\pm$ 0.04 \\
 GINWithJK & 0.83 $\pm$ 0.07 & 0.55 $\pm$ 0.07 & 0.61 $\pm$ 0.15 & 0.60 $\pm$ 0.04 & 0.80 $\pm$ 0.06 \\
 GIN0WithJK & 0.83 $\pm$ 0.06 & 0.54 $\pm$ 0.07 & 0.63 $\pm$ 0.10 & 0.59 $\pm$ 0.06 & \textbf{0.82 $\pm$ 0.05} \\
 GraphSAGE & 0.72 $\pm$ 0.06 & 0.57 $\pm$ 0.08 & 0.68 $\pm$ 0.09 & 0.61 $\pm$ 0.06 & 0.78 $\pm$ 0.01 \\
 GraphSAGEWithJK & 0.71 $\pm$ 0.09 & 0.56 $\pm$ 0.04 & 0.68 $\pm$ 0.09 & 0.60 $\pm$ 0.07 & 0.77 $\pm$ 0.01  \\
 DiffPool & 0.84 $\pm$ 0.12 & 0.57 $\pm$ 0.03 & 0.69 $\pm$ 0.07 & 0.61 $\pm$ 0.06 & 0.77 $\pm$ 0.02 \\
 GlobalAttentionNet & 0.74 $\pm$ 0.07 & 0.56 $\pm$ 0.05 & 0.67 $\pm$ 0.06 & \textbf{0.63 $\pm$ 0.06} & 0.77 $\pm$ 0.01 \\
 Set2SetNet & 0.73 $\pm$ 0.07 & 0.56 $\pm$ 0.03 & 0.68 $\pm$ 0.10 & 0.62 $\pm$ 0.06 & 0.78 $\pm$ 0.04 \\
 SortPool & 0.75 $\pm$ 0.11 & 0.59 $\pm$ 0.08 & 0.66 $\pm$ 0.09 & 0.60 $\pm$ 0.08 & 0.77 $\pm$ 0.01 \\ \\
 
 Proposed Model & \textbf{0.87 $\pm$ 0.06} & \textbf{0.60 $\pm$ 0.08} & 0.62 $\pm$ 0.10 & 0.62 $\pm$ 0.06 & 0.79 $\pm$ 0.05 \\ [1ex] 
 \hline
\end{tabular}
\caption{For each dataset, the mean classification accuracy plus standard deviation of 10-fold cross validation for each graph classification model are displayed.}
\label{table:mean_accuracy}
\end{table*}

It is important to note that the proposed method was compared against a large number of benchmark methods (17). This makes it challenging for any single method to perform best on all datasets. It is difficult to interpret exactly why one deep learning architecture performs better or worse than another on a particular dataset. However, one limitation of the proposed method that may limit its ability to accurately discriminate is that it only models the distribution of node embeddings and not the position of these nodes in the graph. The recent work by You et al. \cite{pmlr-v97-you19b} suggests position information is important. The DiffPool method which performed best on the BZR\_MD dataset actually uses node position information when performing clustering in the pooling step (this is illustrated in Figure 1 of the original paper by Ying et al. \cite{ying2018hierarchical}). We hypothesize that position information may not be important for some graph classification tasks while being important for others. This may explain why the proposed method does not uniformly outperform all others. It is also worth noting that the proposed method achieved similar performance to the GIN method on the BZR\_MD dataset. In a recent paper by Errica et al. \cite{errica2019fair}, the authors found the GIN method to achieve best results on a number of datasets. Finally, it is interesting to note that the end-to-end deep learning models did not uniformly outperform the feature engineered kernel models. In fact, the best mean performance on the BZR\_MD dataset was achieved by a feature engineered kernel model.

\section{Conclusions and Future Work}
\label{sec:conclusions}
This article proposes a novel kernel-based support vector machine (SVM) for graph classification. Unlike existing kernel-based models, the proposed model is trained in a supervised end-to-end manner whereby the convolutional layers, the kernel function and SVM parameters are jointly optimized. The proposed model outperforms existing deep learning models on a number of datasets which demonstrates the utility of the model.

Despite these positive results, the proposed model is not a suitable candidate solution for all graph classification problems. Like all kernel-based models, the proposed model does not natively scale to large datasets. This is a consequence of the fact that training the model requires computation and storing of the kernel matrix whose size is quadratic in the number of training examples. This limitation may potentially be overcome by performing an approximation of the kennel function \cite{rahimi2008random}. The authors plan to investigate this research direction in future work.

\section*{Acknowledgements}
The author would like to acknowledge the many useful discussions he had with Bertrand Gauthier concerning kernel methods.

\section*{Appendix}
\label{sec:appendix}
We briefly describe the kernel functions corresponding to the feature engineered kernel baseline models considered in the work.

\noindent
\textbf{Graphlet} - This is a kernel in the $\mathcal{R}$-convolution framework which uses substructures based on Graphlets and was proposed by Shervashidze et al. \cite{shervashidze2009efficient}.

\noindent
\textbf{Shortest Path} - This is a kernel in the $\mathcal{R}$-convolution framework which uses substructures based on shortest paths and was proposed by Borgwardt et al. \cite{borgwardt2005shortest}.

\noindent
\textbf{Vertex Histogram} - This is a kernel in the $\mathcal{R}$-convolution which uses substructures based on random walks and was proposed by Sugiyama et al. \cite{sugiyama2015halting}.

\noindent
\textbf{Weisfeiler Lehman} - This is a kernel in the Weisfeiler-Lehman framework which uses the Vertex Histogram Kernel as the base kernel \cite{sugiyama2015halting} and was proposed by Shervashidze et al. \cite{shervashidze2011weisfeiler}.

\noindent
\textbf{Pyramid Match} - This kernel uses unsupervised learning and was proposed by Nikolentzos et al. \cite{nikolentzos2017matching}.

\vspace{.5cm}
We briefly describe the architectures corresponding to the end-to-end deep learning baseline models considered in the work. More specific implementation details can be found at the benchmark section of the PyTorch Geometric website.

\noindent
\textbf{GCN} - This model consists of graph convolutional layers proposed by Kipf et al. \cite{kipf2017semi}, followed by mean pooling, followed by a non-linear layer, followed by a dropout layer, followed by a linear layer, followed by a softmax layer.

\noindent
\textbf{GCNWithJK} - This model is equal to GCN but with the addition of jump or skip connections before mean pooling as proposed by Xu et al. \cite{pmlr-v80-xu18c}.

\noindent
\textbf{GIN} - This model consists of the graph convolutional layers proposed by Xu et al. \cite{xu2018}, followed by mean pooling, followed by a non-linear layer, followed by a dropout layer, followed by a linear layer, followed by a softmax layer. The convolution layer in question has a parameter $\epsilon$ which is learned.

\noindent
\textbf{GIN0} - This model is equal to GIN with the exception that the parameter $\epsilon$ is not learned and instead is set to a value of $0$.

\noindent
\textbf{GINWithJK} - This model is equal to GIN but with the addition of jump or skip connections before mean pooling as proposed by Xu et al. \cite{pmlr-v80-xu18c}.

\noindent
\textbf{GIN0WithJK} - This model is equal to GIN0 but with the addition of jump or skip connections before mean pooling as proposed by Xu et al. \cite{pmlr-v80-xu18c}.

\noindent
\textbf{GraphSAGE} - This model consists of the graph convolutional layers proposed by Hamilton et al. \cite{hamilton2017}, followed by a mean pooling layer, followed by a non-linear layer, followed by a dropout layer, followed by a linear layer, followed by a softmax layer.

\noindent
\textbf{GraphSAGEWithJK} - This model is equal to GraphSAGE but with the addition of jump or skip connections before mean pooling as proposed by Xu et al. \cite{pmlr-v80-xu18c}. 

\noindent
\textbf{DiffPool} - This model consists of the graph convolutional layers proposed by Hamilton et al. \cite{hamilton2017}, followed by the pooling method proposed by Ying et al. \cite{ying2018hierarchical}, followed by a non-linear layer, followed by a dropout layer, followed by a linear layer, followed by a softmax layer.

\noindent
\textbf{GlobalAttentionNet} - This model consists of the graph convolutional layers proposed by Hamilton et al. \cite{hamilton2017}, followed by the pooling layer proposed by Li et al. \cite{li2015gated}, followed by a dropout layer, followed by a non-linear layer, followed by a linear layer, followed by a softmax layer.

\noindent
\textbf{Set2SetNet} - This model consists of the graph convolutional layers proposed by Hamilton et al. \cite{hamilton2017}, followed by the pooling layer proposed by Vinyals et al. \cite{vinyals2015}, followed by a non-linear layer, followed by a dropout layer, followed by a linear layer, followed by a softmax layer.

\noindent
\textbf{SortPool} - This model consists of the graph convolutional layers proposed by Hamilton et al. \cite{hamilton2017}, followed by the pooling layer proposed by Zhang et al. \cite{zhang2018}, followed by a non-linear layer, followed by a dropout layer, followed by linear layer, followed by a softmax layer.

\end{document}